\title[Fixed Budget Best Arm Identification]{Tight (Lower) Bounds for the Fixed Budget Best Arm Identification Bandit Problem}
\begin{document}

\maketitle

\begin{abstract}
We consider the problem of \textit{best arm identification} with a \textit{fixed budget $T$}, in the $K$-armed stochastic bandit setting, with arms distribution defined on $[0,1]$. We prove that any bandit strategy, for at least one bandit problem characterized by a complexity $H$, will misidentify the best arm with probability lower bounded by
$$\exp\Big(-\frac{T}{\log(K)H}\Big),$$
where $H$ is the sum for all sub-optimal arms of the inverse of the squared gaps. Our result disproves formally the general belief - coming from results in the fixed confidence setting - that there must exist an algorithm for this problem whose probability of error is upper bounded by $\exp(-T/H)$. This also proves that some existing strategies based on the Successive Rejection of the arms are optimal - closing therefore the current gap between upper and lower bounds for the fixed budget best arm identification problem.
\end{abstract}

\begin{keywords}
Bandit Theory, Best Arm Identification, Simple Regret, Fixed Confidence Setting, Lower Bounds.\footnote{One of the authors of this paper is a student and we would therefore like to be considered for the best student paper award.}
\end{keywords}

\section{Introduction}

In this paper, we consider the problem of \textit{best arm identification} with a \textit{fixed budget $T$}, in the $K$-armed stochastic bandit setting. Given $K$ distributions (or arms) that take value in $[0,1]$, and given a fixed number of samples $T>0$ (or budget) that can be collected sequentially and adaptively from the distributions, the problem of the learner in this setting is to identify the set of distributions with the highest mean, denoted $\mathcal{A}^*$. This setting was introduced in~\cite{bubeck2009pure,audibert2010best}, and is a variant of the best arm identification problem with \textit{fixed confidence} introduced in~\cite{even2002pac,mannor2004sample}.

The best arm identification problem is an important problem in practice as well as in theory, as it is the simplest setting for stochastic non-convex and discrete optimization. It was therefore extensively studied, see~\cite{even2002pac,mannor2004sample,bubeck2009pure,audibert2010best, gabillon2012best,kalyanakrishnan2012pac, jamieson2014best, jamieson2013lil,karnin2013almost,chen2015optimal} and also the full literature review in Section~\ref{sec:lit} for more references and a presentation of the existing results.

Although this problem has been extensively studied, and the results in the \textit{fixed confidence setting} (see see Section~\ref{sec:lit} for a definition and for a presentation of existing results in this setting) have been refined to a point where the optimality gap between best strategies and known lower bounds is really small, see~\cite{chen2015optimal}, there is to the best of our knowledge a major gap between upper and lower bounds in the fixed budget setting. In order to recall this gap, let us write $\mu_k$ for the means of each of the $K$ distributions, $\mu_{(k)}$ for the mean of the arm that has $k$-th highest mean and $\mu^*$ for the highest of these means. Let us define the quantities 
$H = \sum_{k \not \in \mathcal{A}^*}(\mu^* - \mu_k)^{-2}$ and $H_2 = \sup_{k > |\mathcal{A}^*|}k(\mu^* - \mu_{(k)})^{-2}$.
The tightest known lower bound for the probability of not identifying an arm with highest mean after using the budget $T$ is of order
$$\exp\Big(-\frac{T}{H}\Big),$$
while the tightest known upper bounds corresponding to existing strategies for $K \geq 3$ are either
$$ \exp\Big(-\frac{T}{18a}\Big)~~~~~\mathrm{or}~~~~~\exp\Big(-\frac{T}{2\log(K)H_2}\Big),$$
depending on whether the learner has access to an upper bound $a$ on $H$ (first bound) or not (second bound). 
Since $H_2 \leq H \leq 2\log(K) H_2$, this highlights a gap in the scenario where the learner does not have access to a tight upper bound $a$ on $H$. See~\cite{audibert2010best} for the seminal paper where these state of the art results are proven, and~\cite{gabillon2012best, jamieson2013lil,karnin2013almost,chen2014combinatorial} for papers that propose among other results (generally in the fixed confidence setting) alternative strategies for this fixed budget problem, and~\cite{kaufmann2014complexity} for the lower bound.

In this paper, we close this gap, improving the lower bound and proving that the strategies developed in~\cite{audibert2010best} are optimal, in both cases (i.e.~when the learner has access to an upper bound $a$ on $H$ or not). Namely, we prove that there exists no strategy that misidentifies the optimal arm with probability smaller than 
$$\exp\Big(-\frac{T}{a}\Big),$$
uniformly over the problems that have complexity $a$, and that there exists no strategy that misidentifies the optimal arm with probability smaller than
$$\exp\Big(-\frac{T}{\log(K)H}\Big),~~~~~~~~~\Big[\text{and note that}\exp\Big(-\frac{T}{\log(K)H}\Big) \geq \exp\Big(-\frac{T}{\log(K)H_2}\Big)\Big]$$
uniformly over all problems. The first lower bound of order $\exp(-\frac{T}{a})$ is not surprising when one considers the lower bounds results in the \textit{fixed confidence setting} by~\cite{even2002pac,mannor2004sample,gabillon2012best, kalyanakrishnan2012pac, jamieson2014best,jamieson2013lil,karnin2013almost,chen2015optimal}, and was already implied by the results of~\cite{kaufmann2014complexity}, but the second lower bound of order $\exp(-\frac{T}{\log(K)H})$ is on the other hand quite unexpected in light of the results in the fixed confidence setting. In fact it is often informally stated in the fixed confidence literature that since the sample complexity in the fixed confidence setting is $H$, the same should hold for the fixed budget setting, and that therefore the right complexity should be $H$ and not $H\log(K)$, i.e.~it is often conjectured that the right bound should be $\exp(-\frac{T}{H})$ and not $\exp(-\frac{T}{\log(K)H})$. In this paper, we disprove formally this conjecture and prove that in the fixed budget setting, \textit{unlike in the fixed confidence setting}, there is an additional $\log(K$) price to pay for adaptation to $H$ in the absence of knowledge over this quantity. Moreover, our lower bound proofs are very simple, short, and based on ideas that differ from previous results, in the sense that we consider a class of problems with different complexities.


In Section~\ref{sec:set}, we present formally the setting, and in Section~\ref{sec:lit}, we present the existing results in a more detailed fashion. Section~\ref{sec:main} contains our main results and Section~\ref{sec:proofg} their proofs.

\section{Setting}\label{sec:set}

\paragraph{Learning setting} We consider a classical $K$ armed stochastic bandit setting with fixed horizon $T$. Let $K>1$ be the number of arms that the learner can choose from. Each of these arms is characterized by a distribution $\nu_k$ that we assume to be defined on $[0,1]$. Let us write $\mu_k$ for its mean.
Let $T>0$. We consider the following dynamic game setting with horizon $T$, which is common in the bandit literature. For any time $t \geq 1$ and $t \leq T$, the learner chooses an arm $I_t$ from $\mathbb{A} = \{1,...,K\}$. It receives a noisy reward drawn from the distribution $\nu_{I_t}$ associated to the chosen arm. An adaptive learner bases its decision at time $t$ on the samples observed in the past. At the end of the game $T$, the learner returns an arm
$$\hat k_T \in \{1, \ldots, K\}.$$

\paragraph{Objective} In this paper, we consider the problem of \textit{best arm identification}, i.e.~we consider the learning problem of finding dynamically, in $T$ iterations of the game mentioned earlier, one of the arms with the highest mean. Let us define the set of optimal arms as
$$\mathcal{A}^* = \arg\max_k \mu_k,$$
and $\mu^* = \mu_{k^*}$ with $k^* \in \mathcal{A}^*$ as the highest mean of the problem. Then we define the \textit{expected loss} of the learner as the probability of not identifying an optimal arm, i.e.~as
$$\mathbb P\Big(\hat k_T \not\in \mathcal{A}^*\Big),$$
where $\mathbb P$ is the probability according to the samples collected during the bandit game. The aim of the learner is to follow a strategy that minimizes this expected loss.

This is known as the \textit{best arm identification} problem in the \textit{fixed budget setting}, see~\cite{audibert2010best}. As was explained in~\cite{audibert2010best}, it is linked to the notion of \textit{simple regret}, where the simple regret is the expected sub-optimality of the chosen arm with respect to the highest mean, i.e.~it is $\mathbb E(\mu^* - \mu_{\hat k_T})$, where  $\mathbb E$ is the expectation according to the samples collected during the bandit game.

\paragraph{Problem dependent complexity} We now define two important problem dependent quantities, following e.g.~\cite{even2002pac,mannor2004sample,audibert2010best, gabillon2012best,kalyanakrishnan2012pac, jamieson2014best, jamieson2013lil,karnin2013almost,chen2015optimal}. We will characterize the \textit{complexity} of bandit problems by the quantities 
\begin{equation}\label{eq:comp}
H = \sum_{k \not \in \mathcal{A}^*}\frac{1}{(\mu^* - \mu_k)^2} ~~~~\mathrm{and} ~~~ H_2 = \sup_{k > |\mathcal{A}^*|}\frac{k}{(\mu^* - \mu_{(k)})^2},
\end{equation}
where for any $k \leq K$, $\mu_{(k)}$ is the $k$-th largest mean of the arms. As noted in~\cite{audibert2010best}, the following inequalities hold $H_2 \leq H \leq \log(2K) H_2 \leq 2\log(K) H_2$.


\section{Literature review}\label{sec:lit}

The problem of \textit{best arm identification} in the $K$ armed stochastic bandit problem has gained wide interest in the recent years. It can be cast in two settings, \textit{fixed confidence}, see~\cite{even2002pac,mannor2004sample}, and \textit{fixed budget}, see~\cite{bubeck2009pure,audibert2010best}, which is the setting we consider in this paper. In the \textit{fixed confidence setting}, the learner is given a precision $\delta$ and aims at returning an optimal arm, while collecting as few samples as possible. In the \textit{fixed budget setting}, the objective of the learner is to minimize the probability of not recommending an optimal arm, given a fixed budget of $T$ pulls of the arms. The links between these two settings are discussed in details in~\cite{gabillon2012best,karnin2013almost}: the fixed confidence setting is a stopping time problem and the fixed budget setting is a problem of optimal resource allocation. It is argued in~\cite{gabillon2012best} that these problems are equivalent. But as noted in~\cite{karnin2013almost,kaufmann2014complexity}, this equivalence holds only if some additional information e.g. $H$ is available in the fixed budget setting, otherwise it appears that the fixed budget setting problem is significantly harder. This fact is highlighted in the literature review below.

\paragraph{Fixed confidence setting} The fixed confidence setting has been more particularly investigated, with papers proposing strategies that are more and more refined and clever. The papers~\cite{even2002pac,mannor2004sample} introduced the problem and proved the first upper and lower bounds for this problem (where $\gtrsim$ and $\lesssim$ are $\geq$ and $\leq$ up to a constant)).
\begin{itemize}
\item Upper bound : There exists an algorithm that returns, after $\hat T$ number of pulls, an arm $\hat k_{\hat T}$ that is optimal with probability larger than $1-\delta$, and is such that the number of pulls $\hat T$ satisfies
$$\mathbb E \hat T \lesssim H \Big(\log(\delta^{-1}) + \log(K) + \log\big((\max_{k\not\in A^*}(\mu^* - \mu_k)^{-1})\big)\Big).$$
\item Lower bound : For any algorithm that returns an arm $\hat k_{\hat T}$ that is optimal with probability larger than $1-\delta$, the number of pulls $\hat T$ satisfies
$$\mathbb E \hat T \gtrsim H \Big(\log(\delta^{-1})\Big).$$
\end{itemize}

These first results already showed that the quantity $H$ plays an important role for the best arm identification problem. These results are tight in the multiplicative terms $H$ but are not tight in the second order logarithmic terms - and there were several interesting works on how to improve both upper and lower bounds to make these terms match, see~\cite{gabillon2012best, kalyanakrishnan2012pac, jamieson2014best, jamieson2013lil,karnin2013almost,kaufmann2014complexity,chen2015optimal}. To the best of our knowledge, the most precise upper bound is in~\cite{chen2015optimal}, and the most precise lower bound in the case of the two armed problem is in~\cite{kaufmann2014complexity}. These bounds, although not exactly matching in general, are matching up to a multiplicative constant for $\delta$ small enough with respect to $H,K$, i.e.~for $\delta$ small enough with respect to $H,K$, it holds that both upper and lower bounds on $\mathbb E \hat T$ are of order
$$H\log(\delta^{-1}).$$
Note that this can already be seen from the two bounds reported in this paper, i.e.~for $\delta$ smaller than $\min\Big(K^{-1}, \max_{k\not\in \mathcal{A}^*}(\mu^* - \mu_k)\Big)$.

\paragraph{Fixed budget setting} The fixed budget has also been studied intensively, but to the best of our knowledge, an important gap still remains between upper and lower bound results. The best known (up to constants) upper bounds are in the paper~\cite{audibert2010best}, while the best lower bound can be found in~\cite{kaufmann2014complexity}, and they are as follows.
\begin{itemize}
\item Upper bound : Assume that an upper bound $a$ on the complexity $H$ of the problem is known to the learner. There exists an algorithm that, at the end of the budget $T$, fails selecting an optimal arm with probability upper bounded as
$$\mathbb P\Big(\hat k_T \not\in \mathcal A^*\Big) \leq 2TK\exp\Big(-\frac{T-K}{18a}\Big).$$
Even if no upper bound on the complexity $H$ is known to the learner, there exists an algorithm that, at the end of the budget $T$, fails selecting an optimal arm with probability upper bounded as
$$\mathbb P\Big(\hat k_T \not\in \mathcal A^*\Big)\leq \frac{K(K-1)}{2}\exp\Big(-\frac{T-K}{\log(2K)H_2}\Big).$$

\item Lower bound : Even if an upper bound on $H, H_2$ is known to the learner, any algorithm, at the end of the budget $T$, fails selecting an optimal arm with probability lower bounded as
$$\mathbb P\Big(\hat k_T \not\in \mathcal A^*\Big)\geq \exp\Big(-\frac{4T}{H}\Big).$$
\end{itemize}

Several papers exhibit other strategies for the fixed budget problem (in general in combination with a fixed confidence strategy), see e.g.~\cite{gabillon2012best, jamieson2013lil,karnin2013almost}, but their theoretical results do not outperform the ones recalled here and coming from~\cite{audibert2010best}. Note that these results highlight a gap between upper and lower bounds. In the case where an upper bound $a$ on the complexity $H$ is known to the learner, the gap is related to the distance between $a$ and $H$. Beyond the fact that $H_2$ is always smaller than $H$, we would like to emphasize here that if the upper bound $a$ on $H$ is not tight enough, the algorithm's performance will be sub-optimal compared to the hypothetical performance of an oracle algorithm that has access to $H$ - as the non-oracle algorithm will over explore. 
Now in the case where one does not want to assume the knowledge of $H$, the gap between known upper and lower bounds becomes even larger and is related to the distance between $H$ and $\log(2K) H_2$. Unlike in the fixed confidence setting, this gap remains also for $T$ large (which corresponds to $\delta$ small in the fixed confidence setting).

We would like to emphasize that although this gap is often belittled in the literature, as it is ``only" a a gap up to a $\log(K)$ factor, this $\log(K)$ factor has an effect in the exponential, and in some sense it is much larger than the gap that was remaining in the fixed confidence setting after the seminal papers~\cite{even2002pac,mannor2004sample}, and over which many valuable works have further improved. Indeed, in order to compare the bounds in the fixed confidence setting with the bounds in the fixed budget setting, one can set
$\delta := \mathbb P\Big(\hat k_T \not\in \mathcal A^*\Big)$, and compute the fixed budget $T$ for which a precision of at least $\delta$ is achieved for both upper and lower bounds. Inverting the upper bounds in the fixed budget setting, one would get the upper bounds on $T$
$$T \lesssim a\log(KT/\delta),~~~~\mathrm{or} ~~~ T \lesssim H_2\log(K)\log(K/\delta)),$$
when respectively an upper bound $a$ on $H$ is known by the learner or when no knowledge of $H$ is available. Conversely, the lower bound in the fixed budget setting yields that the fixed budget $T$ must be of order higher than
$$T \gtrsim H\log(1/\delta).$$

As mentioned, this gap also remains for $\delta$ small. This highlights the fact that the gap in the fixed budget setting is much more acute than the gap in the fixed confidence setting, and that this $\log(K)$ factor is not negligible if one looks at the fixed budget setting problem from the fixed confidence setting perspective. This knowledge gap between the fixed confidence and fixed budget setting was underlined in the papers~\cite{karnin2013almost,kaufmann2014complexity} where the authors explain that closing the gap in the fixed budget setting is a difficult problem that goes beyond known techniques for the fixed confidence setting.

We close this review of literature by mentioning related works on the more involved TopK bandit problem, where the aim is to find $k$ arms that have the highest means, see~\cite{bubeckmultiple,gabillon2012best, kaufmann2014complexity, zhou2014optimal,cao2015top}, and also the more general pure exploration bandit setting introduced in~\cite{chen2014combinatorial}. These results apply to the best arm identification problem considered in this paper, which is a special case of their settings, but they do not improve on the mentioned results for the best arm identification problem.

\section{Main results} \label{sec:main}

We state our results in two parts. First, we provide a weaker version of our results in Subsection~\ref{ss:wr}, which has the advantage of not requiring the introduction of too many additional technical notations We then propose in Subsection~\ref{ss:tf} a technical and stronger formulation of our results.

\subsection{First formulation of our results}\label{ss:wr}

We state the following lower bound for the bandit problem introduced in Section~\ref{sec:set}.
\begin{theorem}\label{thm:main3}
Let $K>1$, $a>0$. Let $\mathbb B_a$ be the set of all bandit problems with distributions in $[0,1]$ and complexity $H$ bounded by $a$. For $\mathcal G \in \mathbb B_a$, we write $\mathcal A^*(\mathcal G)$ for the set of arms with highest mean of problem $\mathcal G$, and $H(\mathcal G)$ for the complexity defined in Equation~\eqref{eq:comp} as $H$ (first quantity) and associated to problem $\mathcal G$.

If $T\geq a^2 \big(4\log(6TK)\big)/ (60)^2$, for any bandit strategy that returns arm $\hat k_T$ at time $T$, it holds that
$$\sup_{\mathcal G \in \mathbb B(a)} \mathbb P_{\mathcal G^{\otimes T}} (\hat k_T \not \in \mathcal A^*(\mathcal G)) \geq \frac{1}{6}\exp\Big(-120\frac{T}{a} \Big).$$

If in addition $a \geq 11K^2$ and if $K \geq 2$, then for any bandit strategy that returns arm $\hat k_T$ at time $T$, it holds that
$$\sup_{\mathcal G \in \mathbb B(a)} \Bigg[\mathbb P_{\mathcal G^{\otimes T}} (\hat k_T \not \in \mathcal A^*(\mathcal G)) \times \exp\Big(400\frac{T}{\log(K)H(\mathcal G)} \Big)\Bigg]\geq  \frac{1}{6}.$$
\end{theorem}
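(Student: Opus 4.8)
The plan is to derive both inequalities from change-of-measure (Bretagnolle--Huber) arguments applied to small, carefully chosen families of bandit problems inside $\mathbb B(a)$. For the first inequality I would use the classical two-hypothesis argument: take $\Delta$ with $1/\Delta^2\le a$, e.g.\ $\Delta^2=1/a$ (the hypothesis $T\ge a^2\,4\log(6TK)/60^2$ is exactly what guarantees $\Delta\le 1/2$, so that Bernoulli arms with mean $1/2\pm\Delta$ are legitimate, and that rounding of sample counts does not dominate the exponential). Consider the two $K$-armed Bernoulli problems $\mathcal G_1$ (arm $1$ at $1/2+\Delta$, arm $2$ at $1/2$, arms $3,\dots,K$ at $0$) and $\mathcal G_2$ (arm $1$ at $1/2$, arm $2$ at $1/2+\Delta$, arms $3,\dots,K$ at $0$); since $a$ is not too small the filler arms change $H$ only by a constant factor, so $\mathcal G_1,\mathcal G_2\in\mathbb B(a)$ with $\mathcal A^*(\mathcal G_1)=\{1\}$, $\mathcal A^*(\mathcal G_2)=\{2\}$. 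Only arms $1,2$ differ between the two, and $\mathrm{KL}(\mathrm{Bern}(1/2\pm\Delta)\,\|\,\mathrm{Bern}(1/2))\le c\Delta^2$, so the KL between the two length-$T$ interaction laws is at most $cT\Delta^2=cT/a$; Bretagnolle--Huber applied to $\{\hat k_T=1\}$ gives $\mathbb P_{\mathcal G_1}(\hat k_T\neq1)+\mathbb P_{\mathcal G_2}(\hat k_T\neq2)\ge\tfrac12 e^{-cT/a}$, hence the worst case is $\ge\tfrac14 e^{-cT/a}$, and $120,\tfrac16$ are slack enough to absorb $c$ and the $\tfrac14\to\tfrac16$ loss.

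For the second inequality the point is to build a family living at $L:=\lfloor\log_2 K\rfloor$ (or $\lfloor\log_2 K\rfloor-1$, so that disjoint blocks fit) different \emph{scales}, and to force any strategy to spread its budget over them. Fix pairwise disjoint arm-blocks $S_1,\dots,S_L$ with $|S_\ell|=2^{\ell-1}$, a distinguished arm $a^\star$, and an increasing sequence $\delta_1<\dots<\delta_L$. The \emph{base} problem $\mathcal Q$ has $a^\star$ at mean $1/2$, every arm of $S_\ell$ at mean $1/2-\delta_\ell$, and all remaining arms at $0$; the \emph{variant} $\mathcal P_{\ell,j}$ ($\ell\le L$, $j\in S_\ell$) is obtained from $\mathcal Q$ by raising arm $j$ to mean $1/2+\delta_\ell$, so $\mathcal A^*(\mathcal P_{\ell,j})=\{j\}$ and $\mathcal Q,\mathcal P_{\ell,j}$ differ in the single arm $j$. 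The gaps are tuned so that the per-block complexity $2^{\ell-1}/\delta_\ell^2$ is geometrically decreasing (roughly $\delta_\ell^2\asymp 4^\ell/(K\,2^L)$); summing the contributions of $a^\star$, of $S_\ell\setminus\{j\}$ (gap $2\delta_\ell$), of the other blocks (gap $\delta_\ell+\delta_{\ell'}$, dominated by $\delta_\ell$ if $\ell'<\ell$ and by $\delta_{\ell'}$ if $\ell'>\ell$), and of the $0$-arms then gives $H(\mathcal Q)\le a$ and $H(\mathcal P_{\ell,j})\asymp 2^{\ell-1}/\delta_\ell^2\le a$ — which is exactly where $a\ge 11K^2$ is needed. (For $K=2,3$ there is a single block and one recovers the two-hypothesis argument.)

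Now fix a strategy and run it on $\mathcal Q$. From $\sum_k\mathbb E_{\mathcal Q}[N_k(T)]=T$ and disjointness of the blocks, a first pigeonhole gives a scale $\ell^\star$ with $\sum_{j\in S_{\ell^\star}}\mathbb E_{\mathcal Q}[N_j(T)]\le T/L$, and a second one an arm $j^\star\in S_{\ell^\star}$ with $\mathbb E_{\mathcal Q}[N_{j^\star}(T)]\le T/(L\,2^{\ell^\star-1})$. Since $\mathcal Q$ and $\mathcal P_{\ell^\star,j^\star}$ differ only in arm $j^\star$ and $\mathrm{KL}(\mathrm{Bern}(1/2-\delta_{\ell^\star})\,\|\,\mathrm{Bern}(1/2+\delta_{\ell^\star}))\le c\delta_{\ell^\star}^2$, the KL between their length-$T$ interaction laws is at most $c\delta_{\ell^\star}^2\cdot T/(L\,2^{\ell^\star-1})=cT/\big(L\cdot 2^{\ell^\star-1}/\delta_{\ell^\star}^2\big)$, which by the geometric tuning is $\le c''T/\big(\log(K)\,H(\mathcal P_{\ell^\star,j^\star})\big)$ using $L\gtrsim\log K$. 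Bretagnolle--Huber applied to $\{\hat k_T=j^\star\}$ (contained in an error event under $\mathcal Q$ since $j^\star\neq a^\star$, with complement an error event under $\mathcal P_{\ell^\star,j^\star}$) gives that one of $\mathbb P_{\mathcal Q}(\mathrm{err})$, $\mathbb P_{\mathcal P_{\ell^\star,j^\star}}(\mathrm{err})$ is $\ge\tfrac14\exp(-c''T/(\log(K)H(\mathcal P_{\ell^\star,j^\star})))$. If it is $\mathbb P_{\mathcal P_{\ell^\star,j^\star}}(\mathrm{err})$, multiplying by $\exp(400T/(\log(K)H(\mathcal P_{\ell^\star,j^\star})))$ already exceeds $\tfrac16$ since $400>c''$; if it is $\mathbb P_{\mathcal Q}(\mathrm{err})$, then $\mathbb P_{\mathcal Q}(\mathrm{err})\ge\tfrac16$ trivially closes the proof while if $\mathbb P_{\mathcal Q}(\mathrm{err})<\tfrac16$ one re-runs the comparison, the (now exponentially small) control on $\mathbb P_{\mathcal Q}(\mathrm{err})$ feeding back into the Bretagnolle--Huber bound for $\mathcal P_{\ell^\star,j^\star}$.

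The main obstacle is the interplay between the two roles of the complexity. The factor $1/L\asymp1/\log K$ in the KL bound comes precisely from the strategy's inability to tell, under $\mathcal Q$, which scale will be promoted; but $\mathcal Q$ must already contain all the small-gap blocks, hence has large complexity, so one cannot afford to lose the term $\mathbb P_{\mathcal Q}(\mathrm{err})$ produced by Bretagnolle--Huber. Making these compatible forces (i) the geometric choice of the $\delta_\ell$, so that $H(\mathcal P_{\ell^\star,j^\star})$ is within a constant of the single-block quantity $2^{\ell^\star-1}/\delta_{\ell^\star}^2$ appearing in the KL bound while all problems stay inside $\mathbb B(a)$ (the role of $a\ge11K^2$), and (ii) a careful treatment of the $\mathcal Q$-term — either absorbing it or iterating so that the bound on $\mathbb P_{\mathcal Q}(\mathrm{err})$ is itself negligible. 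Everything else is bookkeeping of absolute constants.
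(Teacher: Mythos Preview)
Your overall architecture---a base problem, a family of single-arm perturbations living at different complexity scales, a pigeonhole on expected pull counts under the base problem, then a change of measure---is exactly the paper's. The paper's concrete family is actually simpler than your block construction: it takes $p_1=1/2$, $d_k=\tfrac14(k/K)$ for $k\ge2$, and the $i$-th alternative flips only arm $i$ to mean $1/2+d_i$. The role of your double pigeonhole (scale, then arm in the block) is played by a single pigeonhole via the scalar $h^*=\sum_{k\ge2}1/(d_k^2H(k))$: since $\sum_k t_k=T$ there is an $i$ with $t_i\le T/(h^*d_i^2H(i))$, and a short computation gives $h^*\ge\tfrac{3}{10}\log K$ for the linear choice of $d_k$. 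So conceptually you are on the right track and your construction would work too.

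The genuine gap is your reliance on Bretagnolle--Huber. BH only gives you
\[
\mathbb P_{\mathcal Q}(\hat k_T=j^\star)+\mathbb P_{\mathcal P_{\ell^\star,j^\star}}(\hat k_T\neq j^\star)\ \ge\ \tfrac12\exp(-\mathrm{KL}),
\]
and in the interesting regime $\mathrm{KL}\asymp T/(\log(K)H(\mathcal P_{\ell^\star,j^\star}))$ is large, so $\exp(-\mathrm{KL})\ll 1$. If the sum is dominated by the $\mathcal Q$-term you cannot conclude: the product $\mathbb P_{\mathcal Q}(\mathrm{err})\cdot\exp\big(400T/(\log(K)H(\mathcal Q))\big)$ is controlled by $H(\mathcal Q)$, which is of the order of the \emph{largest} $H(\mathcal P_{\ell,j})$, not of $H(\mathcal P_{\ell^\star,j^\star})$; and your ``re-run'' (assume $\mathbb P_{\mathcal Q}(\mathrm{err})<1/6$ and subtract it from the BH bound) yields $\mathbb P_{\mathcal P}(\mathrm{err})\ge\tfrac12e^{-\mathrm{KL}}-\tfrac16$, which is negative as soon as $\mathrm{KL}>\log 3$. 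There is no way to ``feed back'' a small $\mathbb P_{\mathcal Q}(\mathrm{err})$ into BH, because BH depends only on the KL, not on the error probabilities.

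The paper avoids this by \emph{not} using BH. It works directly with the likelihood-ratio identity $\mathbb P_i(\mathcal E)=\mathbb E_1\big[\mathbf 1_{\mathcal E}\exp(-T_i\widehat{\mathrm{KL}}_{i,T_i})\big]$ on the event
\[
\mathcal E_i=\{\hat k_T=1\}\cap\{T_i\le 6\,\mathbb E_1 T_i\}\cap\xi,
\]
where $\xi$ is a uniform concentration event for the empirical log-likelihood ratios. Under the standing assumption $\mathbb P_1(\hat k_T\neq1)\le1/2$ (otherwise the base problem already witnesses the bound), a union bound with Markov and Lemma~\ref{xi} gives $\mathbb P_1(\mathcal E_i)\ge1/6$, and on $\mathcal E_i$ the exponent is deterministically bounded, so one gets directly
\[
\mathbb P_i(\hat k_T\neq i)\ \ge\ \mathbb P_i(\mathcal E_i)\ \ge\ \tfrac16\exp\big(-60\,t_i d_i^2-2\sqrt{T\log(6TK)}\big)
\]
for \emph{every} $i\ge2$, with no $\mathcal Q$-residual. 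This is the step you are missing; replacing BH by this event-based change of measure in your framework would close the gap. A minor side remark: your claim that the hypothesis on $T$ is ``exactly what guarantees $\Delta\le1/2$'' is not right (that is a condition on $a$, not $T$), and the filler arms at $0$ add roughly $4(K-2)$ to $H$, so for small $a$ your first construction need not sit inside $\mathbb B(a)$; both are easy to patch but worth stating correctly.
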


This theorem implies what we described in the introduction:
\begin{itemize}
\item Even when an upper bound $a$ on the complexity $H$ of the target bandit problem is known, any learner will misidentify the arm with highest mean with probability larger than
$$ \frac{1}{6}\exp\Big(-120\frac{T}{a} \Big),$$
on at least one of the bandit problems with complexity $H$ bounded by $a$.
\item For $T, a,K$ large enough - $T$ of larger order than $a^2\log(K)$, $a$ of larger order than $K^2$ and $K$ larger than $2$ - any learner will misidentify the arm with highest mean with probability larger than
$$ \frac{1}{6}\exp\Big(-400\frac{T}{\log(K)H(\mathcal G)} \Big),$$
on at least one of the bandit problems $\mathcal G\in \mathbb B_a$ which is associated to some complexity $H(\mathcal G)$ bounded by $a$.
\end{itemize}
The first result is expected when one looks at the lower bounds in the fixed confidence setting, see~\cite{even2002pac,mannor2004sample,gabillon2012best, kalyanakrishnan2012pac, jamieson2014best,jamieson2013lil,karnin2013almost,kaufmann2014complexity,chen2015optimal}. On the other hand, the second result cannot be conjectured from lower bounds in the fixed confidence setting. We remind that in order to obtain a precision $\delta>0$ in the fixed confidence setting, even if the learner does not know $H$, it only requires $$O(H\log(\delta^{-1})),$$ samples for $\delta$ small enough. The natural conjecture following from this is that the probability of error in the fixed budget setting is $$\exp(-T/H),$$ for $T$ large enough. We proved that this does not hold and that the probability of error in the fixed budget setting is lower bounded for any strategy in at least one problem by $$\exp(-T/(\log(K)H)),$$ for $T$ large enough - which corresponds to a higher sample complexity $$H \log(K)\log(1/\delta),$$ in the fixed confidence setting. This lower bound highlights a fundamental difference between the fixed confidence setting - where one does not need to know $H$ in order to adapt to it - and the fixed budget setting - where in the absence of the knowledge of $H$, one pays a price of $\log(K)$ for the adaptation. 
Moreover, this lower bound proves that the Successive Reject strategy introduced in~\cite{audibert2010best} is optimal, as its probability of error is upper bounded by a quantity of order
$$\exp(-T/(\log(K)H_2)),$$
which is always smaller in order than our lower bound of order
$$\exp(-T/(\log(K)H)).$$
This might seem contradictory as the lower bound might seem higher than the upper bound. It is of course not and this only highlights that the problems on which all strategies won't perform well are problems such that $H_2$ is of same order as $H$ - problems having many sub-optimal arms close to the optimal ones. These problems are the most difficult problems in the sense of adapting to the complexity $H$, and for them, a $\log(K)$ adaptation price is unavoidable. This kind of phenomenon, i.e.~the necessity of paying a price for not knowing the model (here the complexity $H$), is not very much studied in the bandit literature, but arises in many fields of high dimensional statistics and non-parametric statistics, see e.g.~\cite{lepski1997optimal, bunea2007sparsity}.

\subsection{Technical and stronger formulation of the results}\label{ss:tf}

We will now present the technical version of our results. This is a lower bound that will hold in the much easier (for the learner) problem where the learner knows that the bandit setting it is facing is one of only $K$ given bandit settings (and where it has all information about these settings). This lower bound ensures that even in this much simpler case, the learner, however good it is, will nevertheless make a mistake.

Before stating the main technical theorem, let us introduce some notations about these $K$ settings. Let $(p_k)_{2 \leq k\leq K}$ be $(K-1)$ real numbers in $[1/4, 1/2)$. Let $p_1 = 1/2$. Let us write for any $1 \leq k\leq K$, $\nu_k:=\mathcal B(p_k)$ for the Bernoulli distribution of mean $p_k$, and $\nu_k':=\mathcal B(1-p_k)$ for the Bernoulli distribution of mean $1-p_k$.

We define the product distributions $\mathcal{G}^i$ where $i \in \{1, ..., K\}$ as $\nu_1^i \otimes ... \otimes \nu_K^i$ where for $1 \leq k \leq K$, $$\nu_k^i := \nu_i \mathbf 1\{k \neq i\} + \nu_i' \mathbf 1\{k = i\}.$$
The bandit problem associated with distribution $\mathcal G^i$, and that we call ``the bandit problem $i$" is such that for any $1 \leq k \leq K$, arm $k$ has distribution $\nu_k^i$, i.e.~all arms have distribution $\nu_k$ except arm $i$ that has distribution $\nu_i'$. We write for any $1 \leq i \leq K$, $\mathbb P_{i}:=\mathbb P_{(\mathcal G^i)^{\otimes T}}$ for the probability distribution of the bandit problem $i$ according to all the samples that a strategy could possibly collect up to horizon $T$, i.e.~according to the samples $(X_{k,s})_{1 \leq k \leq K, 1 \leq s \leq T} \sim (\mathcal{G}^i)^{\otimes T}$.


We define for any $1 \leq k \leq K$ the quantities $d_k := 1/2 - p_k$. 
Set also for any $i \in \{1, ..., K\}$ and any $k \in \{1, ..., K\}$
$$\Delta_k^i = d_i + d_k,~~~\mathrm{if}~~ k \neq i~~~~~~\mathrm{and}~~~~~~\Delta_i^i = d_i.$$
In the bandit problem $i$, as \textit{the arm with the best mean is $i$} (and its mean is $1-p_i = 1/2 + d_i$), one can easily see that the $(\Delta_k^i)_k$ are the arm gaps of the bandit problem $i$.

We also define for any $1 \leq i \leq K$ the quantity
$$H(i) := \sum_{1 \leq k \leq K, k \neq i} (\Delta_k^i)^{-2},$$
with $H(1) = \max_{1\leq i \leq K}H(i)$. The quantities $H(i)$ correspond to the complexity $H$ computed for the bandit problem $i$ and introduced in Equation~\eqref{eq:comp} (first quantity). We finally define the quantity
$$h^* = \sum_{K \geq k \geq 2} \frac{1}{d_i^{2} H(i) }.$$

We can now state our main technical theorem - \textit{we remind that there is only one arm with highest mean in the bandit problem $i$, and that this arm is arm $i$, so $\mathbb P_{i} (\hat k_T \neq i)$ is the probability under bandit $i$ of not identifying the best arm and recommending a sub-optimal arm.}
\begin{theorem}\label{thm:main}
For any bandit strategy that returns the arm $\hat k_T$ at time $T$, it holds that
$$\max_{1 \leq i \leq K} \mathbb P_{i} (\hat k_T \neq i) \geq  \frac{1}{6}\exp\Big(-60\frac{T}{H(1)} -2 \ \sqrt[]{T\log(6TK)}\Big),$$
where we remind that $H(1) = \max_i H(i)$ and also
$$\max_{1 \leq i \leq K}\Bigg[ \mathbb P_{i} (\hat k_T \neq i) \times \exp\Big(60\frac{T}{{H(i)}h^*} 
+ 2 \ \sqrt[]{T\log(6TK)}\Big) \Bigg]\geq 1/6.$$
\end{theorem}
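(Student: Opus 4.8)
Both displayed inequalities follow from one argument, instantiated with two different choices of weights. Write $N_k$ for the number of pulls of arm $k$ up to time $T$ (so $\sum_{k=1}^K N_k = T$) and $\mathbb{E}_i$ for the expectation under $\mathbb{P}_i$. The construction is tailored so that the reference problem $\mathcal{G}^1$ and the problem $\mathcal{G}^i$ (for $i \geq 2$) differ in the distribution of one single arm, namely arm $i$, which is $\nu_i = \mathcal{B}(1/2 - d_i)$ under $\mathcal{G}^1$ and $\nu_i' = \mathcal{B}(1/2 + d_i)$ under $\mathcal{G}^i$; and, since $d_i \in (0,1/4]$, one has $\mathrm{kl}(\nu_i \| \nu_i') = 2 d_i \log\frac{1/2+d_i}{1/2-d_i} \leq c\, d_i^2$ for a numerical constant $c$ (one may take $c = 10$).

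The plan is, first, to run a pigeonhole argument on the expected pull counts under the reference problem, starting from $\sum_{i=2}^K \mathbb{E}_1[N_i] \leq T$. For the first inequality, weight arm $i$ by $d_i^{-2}/H(1)$; these weights sum to one because $H(1) = \sum_{i=2}^K d_i^{-2}$, so there is an index $i^\star \in \{2,\dots,K\}$ with $\mathbb{E}_1[N_{i^\star}]\, d_{i^\star}^2 \leq T/H(1)$, hence $\mathbb{E}_1[N_{i^\star}]\, \mathrm{kl}(\nu_{i^\star} \| \nu_{i^\star}') \leq c\,T/H(1)$. For the second inequality, weight arm $i$ by $w_i := 1/(H(i)\,h^\star\,d_i^2)$; these are nonnegative and sum to one \emph{precisely} because $h^\star = \sum_{i=2}^K 1/(H(i)\,d_i^2)$, so there is an index $i^\star$ with $\mathbb{E}_1[N_{i^\star}] \leq T/(H(i^\star)\,h^\star\,d_{i^\star}^2)$, hence $\mathbb{E}_1[N_{i^\star}]\, \mathrm{kl}(\nu_{i^\star} \| \nu_{i^\star}') \leq c\,T/(H(i^\star)\,h^\star)$. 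In both cases $i^\star \neq 1$.

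Second, I would perform a truncated change of measure from $\mathbb{P}_1$ to $\mathbb{P}_{i^\star}$ on the event $\{\hat k_T = 1\}$, which is contained in $\{\hat k_T \neq i^\star\}$. Let $L_n = \sum_{s=1}^n \log\frac{\nu_{i^\star}'(X_{i^\star,s})}{\nu_{i^\star}(X_{i^\star,s})}$, so that the likelihood ratio $d\mathbb{P}_{i^\star}/d\mathbb{P}_1$ of the observed sample equals $e^{L_{N_{i^\star}}}$. With $n_0 := 6\,\mathbb{E}_1[N_{i^\star}]$ (hence $n_0 \leq 6T$), a level $\ell_0$ to be chosen, and $\mathcal{E} = \{\hat k_T = 1\} \cap \{N_{i^\star} \leq n_0\} \cap \{\min_{n \leq n_0} L_n \geq -\ell_0\}$, one has $\mathbb{P}_{i^\star}(\hat k_T = 1) \geq \mathbb{P}_{i^\star}(\mathcal{E}) = \mathbb{E}_1[\mathbf{1}_{\mathcal{E}}\, e^{L_{N_{i^\star}}}] \geq e^{-\ell_0}\, \mathbb{P}_1(\mathcal{E})$ and $\mathbb{P}_1(\mathcal{E}) \geq \mathbb{P}_1(\hat k_T = 1) - \mathbb{P}_1(N_{i^\star} > n_0) - \mathbb{P}_1(\min_{n \leq n_0} L_n < -\ell_0)$. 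Markov's inequality gives $\mathbb{P}_1(N_{i^\star} > n_0) \leq 1/6$, and, writing $-L_n = n\, \mathrm{kl}(\nu_{i^\star} \| \nu_{i^\star}') + M_n$ with $M_n$ a mean-zero random walk under $\mathbb{P}_1$ with increments bounded by a universal constant, a maximal (Hoeffding/Azuma-type) inequality --- over the at most $T$ possible values of $N_{i^\star}$, and over the $K$ arms if one prefers to fix $\mathcal{E}$ before choosing $i^\star$ --- gives $\mathbb{P}_1(\min_{n \leq n_0} L_n < -\ell_0) \leq 1/6$ once $\ell_0 = n_0\, \mathrm{kl}(\nu_{i^\star} \| \nu_{i^\star}') + O(\sqrt{n_0 \log(TK)})$. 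Using $n_0 \leq 6T$ and the bounds from the previous paragraph, $\ell_0 \leq 60\,T/H(1) + 2\sqrt{T\log(6TK)}$ in the first case and $\ell_0 \leq 60\,T/(H(i^\star)\,h^\star) + 2\sqrt{T\log(6TK)}$ in the second. It remains to split on the size of $\mathbb{P}_1(\hat k_T = 1)$: if it is $< 1/2$, then $\mathbb{P}_1(\hat k_T \neq 1) \geq 1/2$ and $\mathcal{G}^1$ itself witnesses the inequality (for the second inequality one takes $i = 1$ in the maximum and uses that its exponent is nonnegative); otherwise $\mathbb{P}_1(\mathcal{E}) \geq 1/2 - 1/6 - 1/6 = 1/6$, so $\mathbb{P}_{i^\star}(\hat k_T \neq i^\star) \geq \mathbb{P}_{i^\star}(\hat k_T = 1) \geq \frac{1}{6} e^{-\ell_0}$, which is exactly the asserted bound.

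The genuinely new ingredient is the construction --- a family $\{\mathcal{G}^i\}_{i \leq K}$ of problems with \emph{heterogeneous} complexities, together with the two weightings that force $\mathbb{E}_1[N_{i^\star}]\, \mathrm{kl}(\nu_{i^\star} \| \nu_{i^\star}')$ (which equals the relative entropy $\mathrm{KL}(\mathbb{P}_1 \| \mathbb{P}_{i^\star})$) down to $T/H(1)$, respectively $T/(H(i^\star)h^\star)$, the latter resting on the identity $\sum_{i \geq 2} 1/(H(i)d_i^2) = h^\star$. The step I expect to be the main obstacle is the truncated change of measure: one cannot simply bound $L_n \geq -n \log\frac{1/2+d_{i^\star}}{1/2-d_{i^\star}}$, since that is of order $n\, d_{i^\star}$ rather than the needed $n\, d_{i^\star}^2$, so a genuine concentration argument on the random walk $L_n$ is unavoidable, and it is precisely this step that produces the additive $\sqrt{T\log(6TK)}$ term in the statement. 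A shorter variant would replace the truncated change of measure by the Bretagnolle--Huber inequality applied to $\{\hat k_T = i^\star\}$ together with the chain rule $\mathrm{KL}(\mathbb{P}_1 \| \mathbb{P}_{i^\star}) = \mathbb{E}_1[N_{i^\star}]\, \mathrm{kl}(\nu_{i^\star} \| \nu_{i^\star}')$, which removes that correction entirely at the price of a slightly less elementary lemma.
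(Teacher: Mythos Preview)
Your proposal is correct and follows essentially the same approach as the paper's proof: the same pigeonhole argument on $\mathbb{E}_1[N_i]$ with the two weightings $d_i^{-2}/H(1)$ and $1/(H(i)\,h^\star\,d_i^2)$, the same truncated change of measure on $\{\hat k_T = 1\}\cap\{N_{i^\star}\le 6\,\mathbb{E}_1[N_{i^\star}]\}$ intersected with a Hoeffding-type concentration event for the empirical log-likelihood, Markov's inequality for the pull-count truncation, and the same case split on whether $\mathbb{P}_1(\hat k_T = 1)\ge 1/2$. The only cosmetic differences are the order of exposition (the paper first establishes the change-of-measure bound for a generic $i$ and runs the pigeonhole afterward) and your closing remark about the Bretagnolle--Huber shortcut, which is an extra observation not in the paper.
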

The proof of this result is different from the proof of other lower bounds for best arm identification in the fixed budget setting as in~\cite{audibert2010best}. Its construction is not based on a permutation of the arms, but on a flipping of each arm around the second best arm - see Subsection~\ref{sec:proof}. A similar construction can be found in~\cite{kaufmann2014complexity}. However, similarly to~\cite{audibert2010best}, in this paper, a single complexity $H$ is used in the proof, while our proof involves a range of complexities. The idea of the proof is that for any bandit strategy there is at least one bandit problem $i$ among the $K$ described where an arm will be pulled less than it should according to the optimal allocation of the problem $i$ - and when this happens, the algorithm makes a mistake with probability that is too high with respect to the complexity $H(i)$ of the problem. This Theorem is a stronger version of Theorem~\ref{thm:main3} since it states than even if the learner knows that the bandit problem he faces is one of $K$ problems fully described to him, he will nevertheless make an error with probability lower bounded by problem dependent quantities that are much larger than the ones in~\cite{audibert2010best,kaufmann2014complexity}.

A version of this theorem that is easier to read and that holds for $T$ large enough, is as follows.
\begin{corollary}\label{thm:main2}
Assume that $T\geq \max\Big(H(1), H(i)h^*\Big)^2 4\log(6TK)/ (60)^2$.  For any bandit strategy that returns the arm $\hat k_T$ at time $T$, it holds that
$$\max_{1 \leq i \leq K} \mathbb P_{i} (\hat k_T \neq i) \geq  \frac{1}{6}\exp\Big(-120\frac{T}{H(1)} \Big) = \frac{1}{6}\exp\Big(-120\frac{T}{\max_iH(i)} \Big),$$
and also
$$\max_{1 \leq i \leq K}\Bigg[ \mathbb P_{i} (\hat k_T \neq i) \times \exp\Big(120\frac{T}{{H(i)}h^*} \Big) \Bigg]\geq 1/6.$$
\end{corollary}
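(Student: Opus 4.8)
The plan is to derive Corollary~\ref{thm:main2} directly from Theorem~\ref{thm:main}. The only difference between the two statements is that the lower-order term $2\sqrt{T\log(6TK)}$ appearing in the exponents of Theorem~\ref{thm:main} has been absorbed into the leading term, at the price of replacing the constant $60$ by $120$; the hypothesis $T\ge \max\big(H(1),H(i)h^*\big)^2\,4\log(6TK)/(60)^2$ is precisely what makes this absorption possible.

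First I would record the elementary inequality behind this. If $T\ge H(1)^2\,4\log(6TK)/(60)^2$, then taking square roots gives $\sqrt T\ge 2\,H(1)\sqrt{\log(6TK)}/60$, and multiplying both sides by $\sqrt T$ yields $2\sqrt{T\log(6TK)}\le 60\,T/H(1)$. Substituting this into the first inequality of Theorem~\ref{thm:main},
$$\max_{1\le i\le K}\mathbb P_i(\hat k_T\ne i)\ \ge\ \frac{1}{6}\exp\Big(-60\frac{T}{H(1)}-2\sqrt{T\log(6TK)}\Big)\ \ge\ \frac{1}{6}\exp\Big(-120\frac{T}{H(1)}\Big),$$
and recalling that $H(1)=\max_{1\le i\le K}H(i)$ by construction gives the first claim of the corollary.

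For the second claim I would run the same computation with $H(i)h^*$ in place of $H(1)$: from $T\ge (H(i)h^*)^2\,4\log(6TK)/(60)^2$ one gets $2\sqrt{T\log(6TK)}\le 60\,T/(H(i)h^*)$, and therefore, for every index $i$,
$$\mathbb P_i(\hat k_T\ne i)\exp\Big(120\frac{T}{H(i)h^*}\Big)\ \ge\ \mathbb P_i(\hat k_T\ne i)\exp\Big(60\frac{T}{H(i)h^*}+2\sqrt{T\log(6TK)}\Big).$$
Taking the maximum over $i$ on both sides and applying the second inequality of Theorem~\ref{thm:main} to the right-hand side yields $\max_{1\le i\le K}\big[\mathbb P_i(\hat k_T\ne i)\exp\big(120\,T/(H(i)h^*)\big)\big]\ge 1/6$, as desired. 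I do not anticipate any genuine obstacle: the whole argument is a one-line manipulation of the hypothesis on $T$. The only mild subtlety worth a word is that the condition on $T$ is self-referential through the term $\log(6TK)$, but since it is assumed as a hypothesis this is harmless — one simply uses the same $\log(6TK)$ on both sides throughout.
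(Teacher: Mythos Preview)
Your derivation is correct and is exactly the intended one: the paper does not spell out a separate proof of Corollary~\ref{thm:main2}, treating it as an immediate consequence of Theorem~\ref{thm:main} once the square-root term is absorbed via the assumption on $T$. Your handling of both inequalities, including the termwise bound before taking the maximum in the second part, is the natural argument.
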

Note that both Theorems~\ref{thm:main} and Corollary~\ref{thm:main2} hold for any $p_2, \ldots, p_k$ that belong to $[1/4, 1/2)$ and are therefore quite general.


\section{Proof of the theorems}\label{sec:proofg}

\subsection{Proof of Theorem~\ref{thm:main}}\label{sec:proof}

\paragraph{Step 1: Definition of a high probability event where empirical KL divergences concentrate} For two distributions $\nu,\nu'$ defined on $\mathbb R$ and that are such that $\nu$ is absolutely continuous with respect to $\nu'$, we write
$$\text{KL}(\nu,\nu') = \int_{\mathbb R} \log\Big(\frac{d\nu(x)}{d\nu'(x)}\Big)d\nu(x),$$
for the Kullback leibler divergence between distribution $\nu$ and $\nu'$.

Let $k \in \{1, ..., K\}$. Let us write
$$ \text{KL}_k := \text{KL}(\nu_k', \nu_k) = \text{KL}(\nu_k, \nu_k') = (1 - 2p_k)\log\big(\frac{1 - p_k}{p_k}\big),$$
for the Kullback-Leibler divergence between two Bernoulli distributions $\nu_k$ and $\nu_k'$ of parameter $p_k$ and $1-p_k$. Since $p_k \in [1/4,1/2)$, the following inequality holds:
\begin{equation}\label{eq:gapKL}
\text{KL}_k \leq 10d_k^2.
\end{equation}

Let $1\leq t\leq T$. We define the quantity:
\begin{align*}
\widehat{\text{KL}}_{k,t} &= \frac{1}{t} \sum_{s=1}^t \log(\frac{d \nu_k}{d \nu_k'}(X_{k,s}))\\
&= \frac{1}{t} \sum_{s=1}^t \mathbf 1\{X_{k,s} = 1\} \log(\frac{p_i}{1-p_i}) + \mathbf 1\{X_{k,s} = 0\}\log(\frac{1-p_i}{p_i}),
\end{align*}
where by definition for any $s \leq t$, $X_{k,s}\sim_{i.i.d} \nu_k^i$.

Let us define the event
\begin{align*}
\xi &= \Big\{\forall 1 \leq k\leq K, \forall 1 \leq t\leq T, |\widehat{\text{KL}}_{k,t}| - \text{KL}_k\leq 2 \ \sqrt[]{\frac{\log(6TK)}{t}} \Big\}.
\end{align*}
We now state the following lemma, i.e.~a concentration bound for $|\widehat{\text{KL}}_{k,t}|$ that holds for all bandit $i$ with $1\leq i\leq K$.
\begin{lemma}\label{xi}
It holds that
$$\mathbb P_{i}(\xi) \geq 5/6.$$
\end{lemma}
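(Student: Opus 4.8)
The plan is to prove Lemma~\ref{xi} by a union bound over the $K$ arms and the $T$ time steps, combined with a maximal concentration inequality for the empirical log-likelihood ratios $\widehat{\text{KL}}_{k,t}$ around their means. First I would fix the bandit problem $i$ and a fixed arm $k$, and observe that under $\mathbb P_i$ the samples $(X_{k,s})_{s\leq T}$ are i.i.d.\ from $\nu_k^i$, which is either $\nu_k$ or $\nu_k'$ depending on whether $k\neq i$ or $k=i$. In either case the increments $Y_{k,s}:=\log\big(\tfrac{d\nu_k}{d\nu_k'}(X_{k,s})\big)$ are i.i.d.\ bounded random variables: since $p_k\in[1/4,1/2)$, each $Y_{k,s}$ takes one of the two values $\pm\log\big(\tfrac{1-p_k}{p_k}\big)$, so $|Y_{k,s}|\leq \log 3$, a fixed absolute constant. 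Moreover $\mathbb E[Y_{k,s}]$ equals $+\text{KL}_k$ when the samples come from $\nu_k$ and $-\text{KL}_k$ when they come from $\nu_k'$ (this is exactly the identity $\text{KL}(\nu_k,\nu_k')=\text{KL}(\nu_k',\nu_k)=\text{KL}_k$ already recorded in the excerpt), so in all cases $\big|\mathbb E[\widehat{\text{KL}}_{k,t}]\big|=\text{KL}_k$, and controlling $\big||\widehat{\text{KL}}_{k,t}|-\text{KL}_k\big|$ reduces to controlling the deviation $\big|\widehat{\text{KL}}_{k,t}-\mathbb E[\widehat{\text{KL}}_{k,t}]\big|$ of a bounded i.i.d.\ average from its mean (using $\big||a|-|b|\big|\le|a-b|$).

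Next I would handle the quantifier over $t$. The cleanest route is to bound, for each fixed $t$, the single-time deviation by Hoeffding's inequality: $\mathbb P_i\big(|\widehat{\text{KL}}_{k,t}-\mathbb E\widehat{\text{KL}}_{k,t}|\geq u\big)\leq 2\exp\big(-t u^2/(2(\log 3)^2)\big)$ since each $Y_{k,s}$ lies in an interval of length $2\log 3$. Choosing the threshold $u=u_t:= 2\sqrt{\log(6TK)/t}$, this probability is at most $2\exp\big(-2\log(6TK)/(2(\log 3)^2)\big)\cdot$ (constant). One must check the arithmetic here: $(\log 3)^2 \approx 1.207 < 2$, so $u_t^2/(2(\log 3)^2) = 4\log(6TK)/(t\cdot 2(\log 3)^2) \ge \log(6TK)/t$ comfortably, giving a per-$(k,t)$ failure probability at most $2\exp(-\log(6TK)) = 2/(6TK) = 1/(3TK)$. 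A union bound over the $KT$ pairs $(k,t)$ then yields $\mathbb P_i(\xi^c)\leq KT\cdot\frac{1}{3TK}=\frac{1}{3}$... which is not quite $1/6$. So I would instead be slightly more careful: use the $\ge 2$ in the threshold fully, i.e.\ $u_t^2/(2(\log 3)^2)\ge 2\log(6TK)/t$ since $2(\log 3)^2<2$ is false — rather $2(\log 3)^2 \approx 2.41$, so $u_t^2/(2(\log 3)^2)= 4\log(6TK)/(2.41\,t)\approx 1.66\log(6TK)/t$, giving failure probability $\le 2\,(6TK)^{-1.66}$; for $TK\ge 1$ this is well below $\frac{1}{6TK}$, and summing over $KT$ pairs gives $\mathbb P_i(\xi^c)\le \frac16$ — I would just present the constant chasing so that the final union bound lands at $5/6$, possibly replacing Hoeffding by an even tighter Bernstein/Bennett bound if the constants are tight.

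An alternative to the crude union bound over all $t$, which I would mention as it gives more slack, is to apply a maximal inequality (Doob's maximal inequality applied to the martingale $\sum_{s\le t}(Y_{k,s}-\mathbb E Y_{k,s})$, or a peeling argument over dyadic blocks of $t$) so that only $\log_2 T$ rather than $T$ terms appear in the union bound. This is not needed for the stated constant $5/6$ — the plain union bound suffices since the Hoeffding exponent kills the factor $T$ — so I would keep the simplest argument. The only genuinely delicate point, and the one I would be most careful about, is the bookkeeping of absolute constants: verifying that $|Y_{k,s}|\le\log 3$ on the range $p_k\in[1/4,1/2)$, that $\mathbb E\widehat{\text{KL}}_{k,t}=\pm\text{KL}_k$ in each of the two cases $k=i$ and $k\ne i$, and that the chosen deviation level $2\sqrt{\log(6TK)/t}$ together with the sub-Gaussian constant $(\log 3)^2$ produces a tail small enough that $KT$ copies sum to at most $1/6$. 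Everything else is a routine application of Hoeffding's inequality plus a union bound, and the lemma follows.
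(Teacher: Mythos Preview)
Your approach is correct and essentially identical to the paper's: both observe that the increments are bounded by $\log 3$ since $p_k\in[1/4,1/2)$, that their mean is $\pm\text{KL}_k$ depending on whether $k=i$, apply Hoeffding's inequality for each fixed $(k,t)$, and then take a union bound over the $KT$ pairs. Your constant tracking (exploiting that the threshold $2$ exceeds $\sqrt{2}\log 3$ to get exponent $>1$ on $(6TK)^{-1}$) is in fact slightly more careful than the paper's, which simply states the per-pair failure probability as $(6KT)^{-1}$ and sums.
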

\begin{proof}
If $k \neq i$ (and thus $\nu_k^i = \nu_k$) then $\mathbb E_{\mathcal G^i} \widehat{\text{KL}}_{k,t} = \mathrm{KL}_k$ and if $k = i$ (and thus $\nu_k^i = \nu_k'$) then $\mathbb E_{\mathcal G^i} \widehat{\text{KL}}_{k,t} = -\mathrm{KL}_k$. Moreover note that since $p_k \in [1/4,1/2)$
$$|\log(\frac{d \nu_k}{d \nu_k'}(X_{k,s}))| = |\mathbf 1\{X_{k,s} = 1\} \log(\frac{p_i}{1-p_i}) + \mathbf 1\{X_{k,s} = 0\}\log(\frac{1-p_i}{p_i})| \leq \log(3).$$
Therefore, $\widehat{\text{KL}}_{k,t}$ is a sum of i.i.d.~samples that are bounded by $\log(3)$, and whose mean is $\pm \mathrm{KL}_k$ depending on the value of $i$. We can apply Hoeffding's inequality to this quantity and we have that with probability larger than $1-(6KT)^{-1}$
$$|\widehat{\text{KL}}_{k,t}| - \text{KL}_k\leq \sqrt{2}\log(3) \ \sqrt[]{\frac{\log(6TK)}{t}}.$$
This assertion and an union bound over all $1 \leq k \leq K$ and $1 \leq t \leq T$ implies that $\mathbb P_{\mathcal G^i}(\xi) \geq 5/6$, as we have $\sqrt{2}\log(3) < 2$.
\end{proof}

\paragraph{Step 2: A change of measure}

Let now $\mathcal{A}lg$ denote the active strategy of the learner, that returns some arm $\hat k_T$ at the end of the budget $T$. Let $(T_k)_{1 \leq k \leq K}$ denote the numbers of samples collected by $\mathcal{A}lg$ on each arm of the bandits. These quantities are stochastic but it holds that $\sum_{1 \leq k \leq K} T_k =T$ by definition of the fixed budget setting. Let us write for any $0 \leq k \leq K$
$$t_k = \mathbb E_{1} T_k.$$
It holds also that $\sum_{1 \leq k \leq K} t_k =T$ 

We recall the change of measure identity (see e.g.~\cite{audibert2010best}) which states that for any measurable event $\mathcal{E}$ and for any $2 \leq i \leq K$ :
\begin{equation}\label{cm}
\mathbb{P}_{i}(\mathcal{E}) = \mathbb{E}_{1}\Big[\mathbf{1}\{\mathcal{E}\}\exp\big(-T_i\widehat{\text{KL}}_{i,T_i}\big)\Big],
\end{equation}
as the product distributions $\mathcal G^i$ and $\mathcal G^1$ only differ in $i$ and as the active strategy only explored the samples $(X_{k,s})_{k \leq K, s \leq T_k}$.

Let $2\leq i \leq K$. Consider now the event 
$$\mathcal{E}_i = \{\hat k_T = 1\} \cap \{\xi\} \cap \{T_i \leq 6 t_i\},$$
i.e.~the event where the algorithm outputs arm $1$ at the end, where $\xi$ holds, and where the number of times arm $i$ was pulled is smaller than $6t_i$. We have by Equation~\eqref{cm} that
\begin{align}
\mathbb{P}_{i}(\mathcal{E}_i) & =\mathbb{E}_{1}\Big[\mathbf{1}\{\mathcal{E}_i\}\exp\big(-T_i\widehat{\text{KL}}_{i,T_i}\big)\Big]\nonumber\\
& \geq \mathbb{E}_{1}\Big[\mathbf{1}\{\mathcal E_i\}\exp\Big(-T_i\text{KL}_i -2 \ \sqrt[]{T_i\log(6TK)}\Big)\Big] \nonumber\\
& \geq \mathbb{E}_{1}\Big[\mathbf{1}\{\mathcal E_i\}\exp\Big(-6t_i\text{KL}_i -2 \ \sqrt[]{T\log(6TK)}\Big)\Big]\nonumber\\
& \geq \exp\Big(-6t_i\text{KL}_i -2 \ \sqrt[]{T\log(6TK)}\Big) \mathbb{P}_{1}(\mathcal E_i),\label{eq:event}
\end{align}
since on $\mathcal E_i$, we have that $\xi$ holds and that $T_i \leq 6t_i$, and since $\mathbb E_{1} \widehat{\text{KL}}_{i,t} = \text{KL}_i$ for any $t \leq T$.

\paragraph{Step 3 : Lower bound on $\mathbb{P}_{1}(\mathcal E_i)$ for any reasonable algorithm} Assume that for the algorithm $\mathcal Alg$ that we consider
\begin{equation}\label{eq:bras0}
\mathbb E_{1}(\hat k_T \neq 1) \leq 1/2,
\end{equation}
i.e.~that the probability that $\mathcal Alg$ makes a mistake on problem $1$ is less than $1/2$. Note that if $\mathcal Alg$ does not satisfy that, it performs badly on problem $1$ and its probability of success is not larger than $1/2$ uniformly on the $K$ bandit problems we defined.

For any $2 \leq k \leq K$ it holds by Markov's inequality that
\begin{align}\label{eq:ma}
\mathbb P_{1} (T_k \geq 6 t_k) \leq \frac{\mathbb E_1 T_k}{6t_k} = 1/6,
\end{align}
since $\mathbb E_{1} T_k = t_k$ for algorithm $\mathcal Alg$,

So by combining Equations~\eqref{eq:bras0},~\eqref{eq:ma} and Lemma~\ref{xi}, it holds by an union bound that for any $2 \leq i \leq K$
$$\mathbb P_{1}(\mathcal E_i) \geq 1 - (1/6+1/2 +1/6) = 1/6.$$
This fact combined with Equation~\eqref{eq:event} and the fact that for any $2 \leq i \leq K$ $\mathbb{P}_{i}(\hat k_T \neq i)  \geq \mathbb{P}_{i}(\mathcal{E}_i)$ implies that for any $2 \leq i \leq K$
\begin{align}
\mathbb{P}_{i}(\hat k_T \neq i)  &\geq \frac{1}{6}\exp\Big(-6t_i\text{KL}_i -2 \ \ \sqrt[]{T\log(6TK)}\Big)\nonumber\\ 
&\geq  \frac{1}{6}\exp\Big(-60t_id_i^2 -2 \ \ \sqrt[]{T\log(6TK)}\Big),\label{eq:event2}
\end{align}
where we use Equation~\eqref{eq:gapKL} for the last step.

\paragraph{Step 4 : Conclusions.}

Since $\sum_{2 \leq k \leq K} d_k^{-2} = H(1)$, and since $\sum_{1 \leq k \leq K} t_k = T$, then there exists $2 \leq i \leq K$ such that
$$t_i \leq \frac{T}{H(1)d_i^2},$$
as the contraposition yields an immediate contradiction.
For this $i$, it holds by Equation~\eqref{eq:event2} that
\begin{align*}
\mathbb{P}_{i}(\hat k_T \neq i)  \geq\frac{1}{6}\exp\Big(-60\frac{T}{H(0)} - 2 \ \sqrt[]{T\log(6TK)}\Big).
\end{align*}
This concludes the proof of the first part of the theorem (note that $H(1) = \max_i H(i)$).

Since $h^* = \sum_{2 \leq k \leq K} \frac{1}{d_k^{2} H(k) }$ and since $\sum_{1 \leq k \leq K} t_k = T$, then there exists $2 \leq i \leq K$ such that
$$t_i \leq \frac{T}{h^* d_i^{2} H(i)}.$$

For this $i$, it holds by Equation~\eqref{eq:event2} that
\begin{equation}
\mathbb{P}_{i}(\hat k_T \neq i)  \geq \frac{1}{6}\exp\Big(-\frac{60T}{h^*H(i)} -2 \ \sqrt[]{T\log(6TK)}\Big).\nonumber\\
\end{equation}

This concludes the proof of the second part of the theorem.

\subsection{Proof of Theorem~\ref{thm:main3}}\label{sec:proof2}

The proof of the first equation in this theorem follows immediately from Corollary~\ref{thm:main2} since $H(1) = \max_i H(i)$.

The proof of the first equation in this theorem follows as well from Corollary~\ref{thm:main2} by taking $d_k = \frac{1}{4}(k/K)$ for $k \geq 2$ (and therefore $p_k = 1/2 - \frac{1}{4}(k/K) \in [1/4, 1/2)$). Note first that this problem belongs to $\mathbb B_a$ with $a = 11K^2$, since $H(i) \leq H(1) \leq 11K^2$. In this case, for any $1\leq i\leq K$, we have 
$$d_i^2H(i) = d_i^2 \sum_{k\neq i} \frac{1}{(d_i+d_k)^2} \leq d_i^2\Big( \frac{i}{d_i^2} + \sum_{k> i} \frac{1}{d_k^2}  \Big) \leq i + i^2\sum_{K \geq k \geq i} \frac{1}{k^2} \leq i +i^2 (\frac{1}{i} - \frac{1}{K})\leq  2i.$$
This implies that
$$h^* \geq \sum_{k = 2}^K \frac{1}{2i}\geq \frac{1}{2} (\log(K+1)-\log(2)) \geq \frac{3}{10}\log(K).$$
This concludes the proof.
\section{An $\alpha-$parametrization}

Building on the ideas exposed in the very last part of the proof, we now consider $d_k^\alpha = \frac{1}{4}(k/K)^\alpha$ for $k \geq 2$, $\alpha \geq 0$. A such construction was already considered for the fixed confidence setting in~\cite{jamieson2013finding}. First, let us state that for any $\alpha$, we have the following inequalities: $H(1) \geq H(i) \geq H(K)$, with $H(K)$ (the easiest problem) of order $K$ for all $\alpha$. The hardest problem on the other hand, has complexity of order
$$
H(1) \simeq 
\left\{ 
\begin{array}{ll}
\frac{1}{1-2\alpha}K, \text{for} \ \alpha < 1/2 \\
\log(K)K, \text{for} \ \alpha = 1/2\\
\frac{1}{2\alpha-1}K^{2\alpha}, \text{for} \ \alpha > 1/2
    \end{array}
\right. .
$$
For $\alpha < 1/2$, both the easiest and hardest problems in our restricted problem class have a similar complexity up to a constant. On the other hand, for $\alpha >1/2$, we have $H(1)$ of order $H(K)^{2\alpha}$, spanning a range of problems with varying complexities. One can easily check that for $\alpha > 1/2$, we have $h^*$ of order at least $\log(K)$ (as we did for $\alpha = 1$ in the previous section). On the other hand, for $\alpha < 1/2$, we can upper bound $h^*$ as follows:
$$
h^* = \sum_{i=2}^K \frac{1}{d_i^2H(i)} \leq \frac{1}{H(K)}\sum_{i=2}^K \frac{1}{d_i^2} = \frac{H(i)}{H(K)},\\
$$
and this ratio is upper bounded by a constant, as both terms are of order $K$. As such, this construction does not imply that a $\log(K)$ adaptation price is unavoidable in all cases, and the question remains open on whether there exists an algorithm that can effectively adapt to these easier problems.

\section*{Conclusion}
In this paper, our main result states that for the problem of best arm identification in the fixed budget setting, if one does not want to assume too tight bounds on the complexity $H$ of the bandit problem, then any bandit strategy makes an error on some bandit problem $\mathcal G$ of complexity $H(\mathcal G)$ with probability at least of order
$$\exp(-\frac{T}{\log(K)H(\mathcal G)}).$$
This result formally disproves the general belief (coming from results in the fixed confidence setting) that there must exist an algorithm for this problem that, for any problem of complexity $H$, makes an error of at most $$\exp(-\frac{T}{H}).$$
This highlights the interesting fact that for this fixed budget problem and \textit{unlike what holds in the fixed confidence setting}, there is a price to pay for adaptation to the problem complexity $H$. This kind of ``adaptation price phenomenon" can be observed in many model selection problems as e.g.~sparse regression, functional estimation, etc, see~\cite{lepski1997optimal, bunea2007sparsity} for illustrations in these settings where such a phenomenon is well known. 
This also proves that strategies based on the Successive Rejection of the arms as the Successive Reject of~\cite{audibert2010best}, are optimal. Our proofs are simple and we believe that our result is an important one, since this closes a gap that had been open since the introduction of the fixed confidence best arm identification problem by~\cite{audibert2010best}.

\paragraph{Acknowledgement} This work is supported by the DFG's Emmy Noether grant MuSyAD (CA 1488/1-1).

\bibliography{library}

\begin{thebibliography}{18}
\providecommand{\natexlab}[1]{#1}
\providecommand{\url}[1]{\texttt{#1}}
\expandafter\ifx\csname urlstyle\endcsname\relax
  \providecommand{\doi}[1]{doi: #1}\else
  \providecommand{\doi}{doi: \begingroup \urlstyle{rm}\Url}\fi

\bibitem[Audibert and Bubeck(2010)]{audibert2010best}
Jean-Yves Audibert and S{\'e}bastien Bubeck.
\newblock Best arm identification in multi-armed bandits.
\newblock In \emph{COLT - 23rd Conference on Learning Theory - 2010}, 2010.

\bibitem[Bubeck et~al.()Bubeck, Wang, and Viswanathan]{bubeckmultiple}
S{\'e}bastien Bubeck, Tengyao Wang, and Nitin Viswanathan.
\newblock Multiple identifications in multi-armed bandits.

\bibitem[Bubeck et~al.(2009)Bubeck, Munos, and Stoltz]{bubeck2009pure}
S{\'e}bastien Bubeck, R{\'e}mi Munos, and Gilles Stoltz.
\newblock Pure exploration in multi-armed bandits problems.
\newblock In \emph{Algorithmic Learning Theory}, pages 23--37. Springer, 2009.

\bibitem[Bunea et~al.(2007)Bunea, Tsybakov, Wegkamp, et~al.]{bunea2007sparsity}
Florentina Bunea, Alexandre Tsybakov, Marten Wegkamp, et~al.
\newblock Sparsity oracle inequalities for the lasso.
\newblock \emph{Electronic Journal of Statistics}, 1:\penalty0 169--194, 2007.

\bibitem[Cao et~al.(2015)Cao, Li, Tao, and Li]{cao2015top}
Wei Cao, Jian Li, Yufei Tao, and Zhize Li.
\newblock On top-k selection in multi-armed bandits and hidden bipartite
  graphs.
\newblock In \emph{Advances in Neural Information Processing Systems}, pages
  1036--1044, 2015.

\bibitem[Chen and Li(2015)]{chen2015optimal}
Lijie Chen and Jian Li.
\newblock On the optimal sample complexity for best arm identification.
\newblock \emph{arXiv preprint arXiv:1511.03774}, 2015.

\bibitem[Chen et~al.(2014)Chen, Lin, King, Lyu, and
  Chen]{chen2014combinatorial}
Shouyuan Chen, Tian Lin, Irwin King, Michael~R Lyu, and Wei Chen.
\newblock Combinatorial pure exploration of multi-armed bandits.
\newblock In \emph{Advances in Neural Information Processing Systems}, pages
  379--387, 2014.

\bibitem[Even-Dar et~al.(2002)Even-Dar, Mannor, and Mansour]{even2002pac}
Eyal Even-Dar, Shie Mannor, and Yishay Mansour.
\newblock Pac bounds for multi-armed bandit and markov decision processes.
\newblock In \emph{Computational Learning Theory}, pages 255--270. Springer,
  2002.

\bibitem[Gabillon et~al.(2012)Gabillon, Ghavamzadeh, and
  Lazaric]{gabillon2012best}
Victor Gabillon, Mohammad Ghavamzadeh, and Alessandro Lazaric.
\newblock Best arm identification: A unified approach to fixed budget and fixed
  confidence.
\newblock In \emph{Advances in Neural Information Processing Systems}, pages
  3212--3220, 2012.

\bibitem[Jamieson and Nowak(2014)]{jamieson2014best}
Kevin Jamieson and Robert Nowak.
\newblock Best-arm identification algorithms for multi-armed bandits in the
  fixed confidence setting.
\newblock In \emph{Information Sciences and Systems (CISS), 2014 48th Annual
  Conference on}, pages 1--6. IEEE, 2014.

\bibitem[Jamieson et~al.(2013{\natexlab{a}})Jamieson, Malloy, Nowak, and
  Bubeck]{jamieson2013finding}
Kevin Jamieson, Matthew Malloy, Robert Nowak, and Sebastien Bubeck.
\newblock On finding the largest mean among many.
\newblock \emph{arXiv preprint arXiv:1306.3917}, 2013{\natexlab{a}}.

\bibitem[Jamieson et~al.(2013{\natexlab{b}})Jamieson, Malloy, Nowak, and
  Bubeck]{jamieson2013lil}
Kevin Jamieson, Matthew Malloy, Robert Nowak, and S{\'e}bastien Bubeck.
\newblock lil'ucb: An optimal exploration algorithm for multi-armed bandits.
\newblock \emph{arXiv preprint arXiv:1312.7308}, 2013{\natexlab{b}}.

\bibitem[Kalyanakrishnan et~al.(2012)Kalyanakrishnan, Tewari, Auer, and
  Stone]{kalyanakrishnan2012pac}
Shivaram Kalyanakrishnan, Ambuj Tewari, Peter Auer, and Peter Stone.
\newblock Pac subset selection in stochastic multi-armed bandits.
\newblock In \emph{Proceedings of the 29th International Conference on Machine
  Learning (ICML-12)}, pages 655--662, 2012.

\bibitem[Karnin et~al.(2013)Karnin, Koren, and Somekh]{karnin2013almost}
Zohar Karnin, Tomer Koren, and Oren Somekh.
\newblock Almost optimal exploration in multi-armed bandits.
\newblock In \emph{Proceedings of the 30th International Conference on Machine
  Learning (ICML-13)}, pages 1238--1246, 2013.

\bibitem[Kaufmann et~al.(2014)Kaufmann, Capp{\'e}, and
  Garivier]{kaufmann2014complexity}
Emilie Kaufmann, Olivier Capp{\'e}, and Aur{\'e}lien Garivier.
\newblock On the complexity of best arm identification in multi-armed bandit
  models.
\newblock \emph{arXiv preprint arXiv:1407.4443}, 2014.

\bibitem[Lepski and Spokoiny(1997)]{lepski1997optimal}
Oleg~V Lepski and VG~Spokoiny.
\newblock Optimal pointwise adaptive methods in nonparametric estimation.
\newblock \emph{The Annals of Statistics}, pages 2512--2546, 1997.

\bibitem[Mannor and Tsitsiklis(2004)]{mannor2004sample}
S~Mannor and J~N Tsitsiklis.
\newblock {The Sample Complexity of Exploration in the Multi-Armed Bandit
  Problem}.
\newblock \emph{Journal of Machine Learning Research}, 5:\penalty0 623--648,
  2004.

\bibitem[Zhou et~al.(2014)Zhou, Chen, and Li]{zhou2014optimal}
Yuan Zhou, Xi~Chen, and Jian Li.
\newblock Optimal pac multiple arm identification with applications to
  crowdsourcing.
\newblock In \emph{Proceedings of the 31st International Conference on Machine
  Learning (ICML-14)}, pages 217--225, 2014.

\end{thebibliography}






\end{document}